\documentclass{article}

\usepackage{microtype}
\usepackage{graphicx}
\usepackage{subcaption}
\usepackage{booktabs} 
\usepackage{soul}
\usepackage{fontawesome5}
\usepackage{hyperref}

\usepackage[accepted]{icml2026}

\usepackage{amsmath}
\usepackage{amssymb}
\usepackage{mathtools}
\usepackage{amsthm}

\usepackage[capitalize,noabbrev]{cleveref}

\theoremstyle{plain}
\newtheorem{theorem}{Theorem}[section]

\newtheorem{lemma}[theorem]{Lemma}

\theoremstyle{definition}

\theoremstyle{remark}

\usepackage[textsize=tiny]{todonotes}
\usepackage[utf8]{inputenc}
\usepackage{booktabs}   
\usepackage{multirow}   
\usepackage{graphicx}   
\usepackage[table]{xcolor} 
\usepackage{arydshln} 
\usepackage{graphicx}

\definecolor{lightpink}{RGB}{255,245,238} 
\definecolor{growthgreen}{RGB}{50, 205, 50}
\definecolor{textgray}{gray}{0.5}

\newcommand{\g}[1]{\ensuremath{_{\textcolor{gray}{#1}}}}
\newcommand{\cp}{\cellcolor{lightpink}}

\icmltitlerunning{Stop the Flip-Flop: Context-Preserving Verification for Fast Revocable Diffusion Decoding}

\begin{document}

\twocolumn[
  \icmltitle{Stop the Flip-Flop: Context-Preserving Verification for Fast Revocable Diffusion Decoding
}



  \icmlsetsymbol{equal}{*}

  \begin{icmlauthorlist}
    \icmlauthor{Yanzheng Xiang}{equal,kcl}
    \icmlauthor{Lan Wei}{equal,IC}
    \icmlauthor{Yizhen Yao}{kcl}
    \icmlauthor{Qinglin Zhu}{kcl}
    \icmlauthor{Hanqi Yan}{kcl}
    \icmlauthor{Chen Jin}{comp}
    \icmlauthor{Philip Alexander Teare}{comp}
    \icmlauthor{Dandan Zhang}{IC}
    \icmlauthor{Lin Gui}{kcl}
    \icmlauthor{Amrutha Saseendran}{comp}
    \icmlauthor{Yulan He}{kcl,alanTuring}
  \end{icmlauthorlist}

  \icmlaffiliation{kcl}{King's College London, UK}
  \icmlaffiliation{comp}{Centre for AI, Data Science \& Artificial Intelligence, BioPharmaceuticals R\&D, AstraZeneca, UK}
  \icmlaffiliation{alanTuring}{The Alan Turing Institute, UK}
  \icmlaffiliation{IC}{Imperial College London, UK}
  \icmlcorrespondingauthor{Yulan He}{yulan.he@kcl.ac.uk}

  \icmlkeywords{Machine Learning, ICML}

  \vskip 0.3in
]



\printAffiliationsAndNotice{\icmlEqualContribution}

\begin{abstract}
Parallel diffusion decoding can accelerate diffusion language model inference by unmasking multiple tokens per step, but aggressive parallelism often harms quality.
Revocable decoding mitigates this by rechecking earlier tokens, yet we observe that existing verification schemes frequently trigger flip-flop oscillations, where tokens are remasked and later restored unchanged.
This behaviour slows inference in two ways: remasking verified positions weakens the conditioning context for parallel drafting, and repeated remask cycles consume the revision budget with little net progress.
We propose \textbf{COVER} (\underline{C}ache \underline{O}verride \underline{V}erification for \underline{E}fficient \underline{R}evision), which performs leave-one-out verification and stable drafting within a single forward pass. 
COVER constructs two attention views via KV cache override: selected seeds are masked for verification, while their cached key value states are injected for all other queries to preserve contextual information, with a closed form diagonal correction preventing self leakage at the seed positions.
COVER further uses stability-aware adaptive verification to prioritise high-risk seeds and adjust the verification budget at each step, thereby reducing unnecessary revisions and enabling faster decoding while preserving output quality across benchmarks.
\end{abstract}
\section{Introduction}

\begin{figure}[t]
  \centering
  \includegraphics[width=\linewidth]{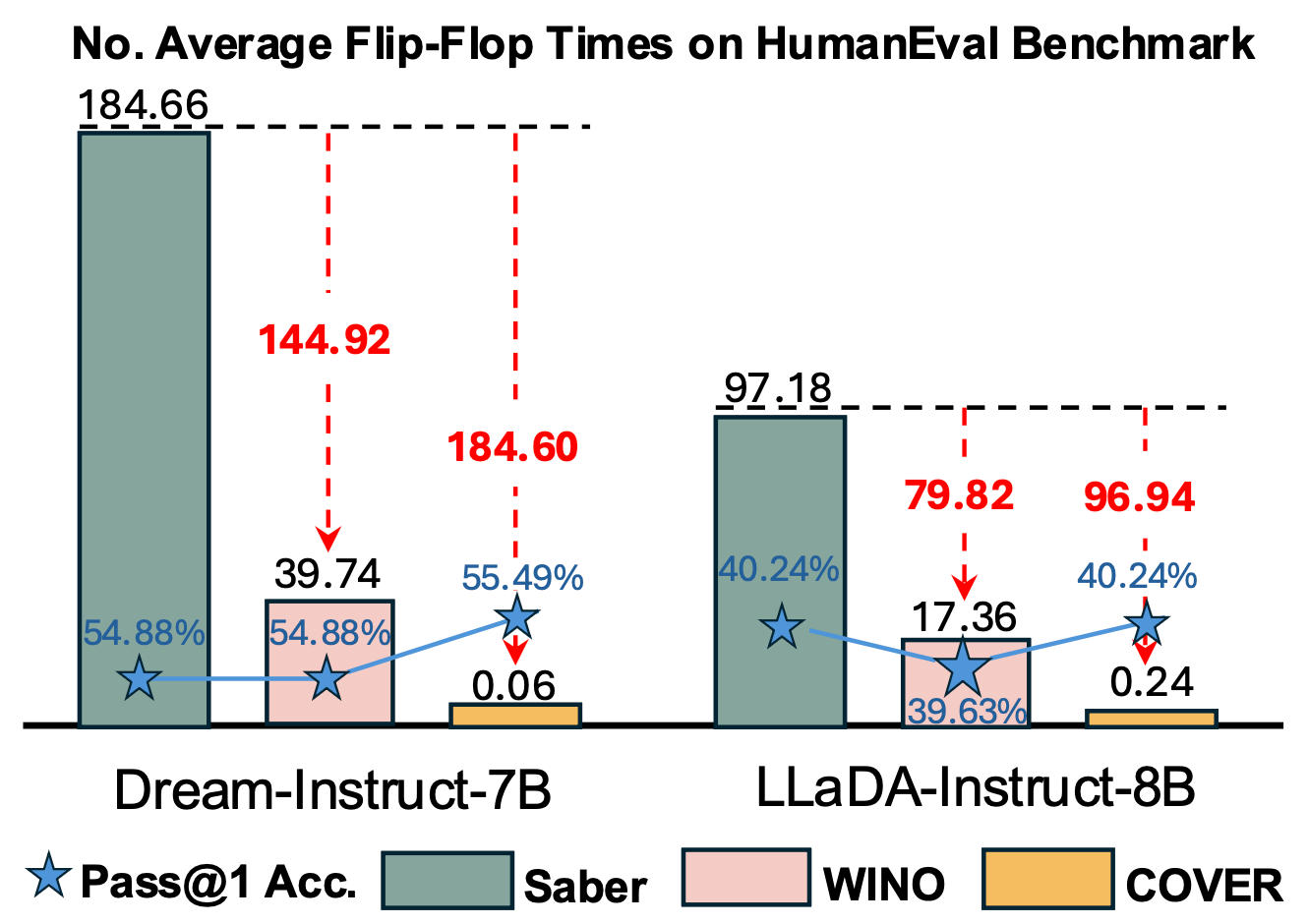}
  \caption{Flip-flop behaviour on HumanEval for Dream-Ins-7B and LLaDA-Ins-8B under two revocable baselines (Saber, WINO) and ours (\textsc{COVER}). Unlike baselines that repeatedly ReMask, \textsc{COVER} uses context-preserving in-place verification to reduce oscillatory revisions while maintaining generation quality.
  }
  \label{fig:infoac}
\end{figure}

Autoregressive language models~\citep{Dubey2024TheL3,Brown2020LanguageMA,Radford2019LanguageMA,Radford2018ImprovingLU} generate text token by token and remain the dominant paradigm for high quality generation.
Yet this sequential decoding is a persistent inference bottleneck, and early errors can propagate through the remainder of the output~\citep{Valmeekam2023CanLL,Stechly2023GPT4DK}.
Diffusion large language models (dLLMs) offer an appealing alternative: they denoise an initially masked sequence and can, in principle, update many positions in parallel~\citep{Li2022DiffusionLMIC}.
In practice, however, aggressive parallel unmasking often harms generation quality~\citep{hong2025wide,dong2025saber}, so dLLMs frequently revert to conservative decoding that unmasks only one position per step, largely sacrificing the promised speed gains~\citep{Nie2025LargeLD,Ye2025Dream7D,Xie2025DreamCoder7A}.

Recent work attempts to bridge this gap with revocable parallel diffusion decoding.
These methods draft multiple tokens in parallel and then revisit a subset of previously unmasked positions using the newly available context, optionally revoking them by resetting to \texttt{[MASK]}.
WINO~\citep{hong2025wide} performs verification through an auxiliary shadow block, whereas Saber~\citep{dong2025saber} triggers remasking based on confidence drops.
Although revocation improves robustness, existing verification mechanisms introduce substantial overhead.
WINO increases effective sequence length and memory footprint, and both methods depend on explicit remasking, which replaces content tokens with \texttt{[MASK]} for all queries and can destabilise subsequent drafts, leading to slower net denoising progress.

In this work, we highlight an inefficiency of revocable decoding that standard accuracy metrics do not capture.
We observe flip-flop oscillations, where a position is remasked and later re-unmasked to exactly the same token.
Figure~\ref{fig:infoac} shows that such oscillations occur frequently under existing revocable baselines across dLLMs, indicating that many verification actions consume iterations without producing a correction.
This creates two coupled inefficiencies.
First, remasking replaces a content bearing embedding with \texttt{[MASK]}, weakening the conditioning context used by other positions during parallel drafting.
Second, each ineffective remask spends future unmask budget merely to restore the same token, reducing net denoising progress under any fixed step or unmask budget.

To solve this, we propose \textbf{COVER}\footnote{Our code is available at: \url{https://github.com/xyzCS/COVER}} (Cache Override Verification for Efficient Revision), a context-preserving single-pass verification mechanism for revocable parallel diffusion decoding.
At each step, \textsc{COVER} rechecks a small seed set by masking these positions in the input while overriding their KV states with cached values from the previous step.
This dual view computation keeps the drafting context for all non seed queries unchanged, yet enables faithful leave one out verification on the seeds via a diagonal correction that removes self leakage.
To make cache reuse reliable, \textsc{COVER} chooses seeds using a stability aware score that trades off uncertainty against estimated influence on the remaining masked positions, adapts the verification token number per step, and updates verified positions by \textsc{Keep}, \textsc{Replace}, or \textsc{ReMask} to avoid ineffective remasking cycles.

Our contributions are as follows:
\begin{itemize}\setlength{\itemsep}{0.2em}\setlength{\parsep}{0em}\setlength{\topsep}{0.2em}
\item We identify flip-flop oscillations as a dominant inefficiency in revocable diffusion decoding and show how explicit remasking weakens drafting context and wastes the revision budget.
\item We introduce an in place KV cache override verification mechanism with diagonal correction, enabling faithful leave-one-out checks and stable parallel drafting within a single forward pass.
\item We propose stability aware and adaptive seed selection that prioritises uncertain and influential positions while avoiding unstable cache reuse, enabling efficient multi-token verification.
\item We show that \textsc{COVER} generally preserves or improves accuracy while substantially reducing decoding steps, yielding consistent end-to-end speedups of up to $11.64\times$ (Dream-Ins-7B), which supports reliable multi-token drafting via context-preserving in-place verification.
\end{itemize}

\section{Related Work}

\textbf{Diffusion Large Language Models (dLLMs).} 
Diffusion language models generate text by iteratively denoising a partially masked sequence, enabling multi token generation in principle.
Early work studied both continuous diffusion for text \cite{li2022diffusion,gong2022diffuseq,han2023ssd} and discrete formulations \cite{ou2024your,lou2023discrete,austin2021structured,sahoo2024simple}.
Among these, masked discrete diffusion models have proven most amenable to large scale training and deployment \cite{sahoo2024simple}.
Recent releases include open models such as LLaDA \cite{Nie2025LargeLD} and Dream \cite{Ye2025Dream7D}, as well as commercial systems such as Mercury \cite{labs2025mercuryultrafastlanguagemodels} and Gemini Diffusion \cite{deepmind2025geminidiffu}.
Despite their potential, practical inference remains challenging: aggressive parallel unmasking often degrades generation quality, while bidirectional attention and the lack of a stable KV cache make each decoding step expensive.
Closing this gap between multi token capacity and reliable fast inference is an active research direction.

\textbf{dLLMs Acceleration.} 
Existing acceleration methods mainly follow two directions: reducing per step compute via KV reuse \cite{liu2025dllm,wu2025fast,song2025sparse} and reducing the number of steps via parallel decoding \cite{israel2025accelerating,wang2025diffusion,Kang2025ParallelBenchUT}.
On the systems side, Fast dLLMs \cite{wu2025fast} observes that KV states change smoothly across diffusion steps under full attention and proposes to cache and update them blockwise, amortising recomputation.
On the algorithmic side, parallel decoding unmasks multiple positions per step, typically guided by confidence criteria, and relies on verification with optional remasking to correct erroneous drafts \cite{wang2025remaskingdiscretediffusionmodels,kong2025accelerating,dong2025saber,ye2026rejection}.
WINO \cite{hong2025wide} performs verification using an auxiliary shadow block with a stricter criterion than drafting, which improves selectivity but introduces additional computation.
dParallel \cite{chen2025dparallel} combines self-distillation with entropy threshold-based remasking to reduce steps, but it requires retraining the diffusion model to obtain the high certainty drafts needed for aggressive parallel unmasking.
Instead, \textsc{COVER} achieves faithful leave-one-out verification in place via KV cache override with diagonal correction, and selects verification seeds with a stability aware rule, enabling fast parallel decoding without extra blocks or retraining.

\section{Revocable Parallel Diffusion Decoding}
\label{sec:prelim_revocable}

Let $\mathcal{V}$ be a vocabulary and let \texttt{[MASK]} be a special token.
We consider conditional generation with a prompt $X$ and a response of fixed length $L$.
At step $t\in\{0,\dots,T\}$, the partial state is
$Y^{(t)}=(y^{(t)}_1,\dots,y^{(t)}_L)\in(\mathcal{V}\cup\{\texttt{[MASK]}\})^L$ with
$Y^{(0)}=\texttt{[MASK]}^L$.
We denote masked and unmasked indices by
$\mathcal{M}_t := \{i\in[L]: y_i^{(t)}=\texttt{[MASK]}\}$ and $\mathcal{U}_t := [L]\setminus\mathcal{M}_t$.
Given $(X,Y^{(t-1)})$, the diffusion model outputs per-position token distributions
$\{p_\theta^{(i)}(\cdot\mid X,Y^{(t-1)})\}_{i=1}^L$.

\textbf{Decoding protocol.}
Revocable parallel diffusion decoding iterates for $t=1,\dots,T$.
Step $t$ takes as input the current state $Y^{(t-1)}$ and a seed set
$\mathcal{S}_{t-1}\subseteq\mathcal{U}_{t-1}$ (with $\mathcal{S}_0=\emptyset$),
where $\mathcal{S}_{t-1}$ contains previously unmasked positions scheduled to be rechecked at step $t$.
The procedure is specified by three step-dependent rules: a drafting rule (choose $\mathcal{D}_t$),
a verification rule (produce updates and a remask set), and a seed selection rule (choose $\mathcal{S}_t$).

\textbf{Drafting.}
A drafting rule selects a set $\mathcal{D}_t\subseteq\mathcal{M}_{t-1}$ of currently masked positions to unmask in parallel.
For each $i\in\mathcal{D}_t$, it proposes a token
$\hat y_i^{(t)} := \arg\max_{v\in\mathcal{V}} p_\theta^{(i)}(v\mid X,Y^{(t-1)})$.
A typical instantiation ranks masked positions by confidence
$c_i^{(t-1)}:=\max_{v\in\mathcal{V}}p_\theta^{(i)}(v\mid X,Y^{(t-1)})$ and selects the top ones,
optionally subject to a budget $|\mathcal{D}_t|\le B$.

\textbf{Verification with optional revocation.}
Given the newly drafted context, a verification rule revisits each seed position $i\in\mathcal{S}_{t-1}$.
It either outputs an updated token $\bar y_i^{(t)}\in\mathcal{V}$, or revokes the position by resetting it to \texttt{[MASK]}.
The revoked indices form the remask set $\mathcal{R}_t\subseteq\mathcal{S}_{t-1}$.

\textbf{State update.}
Initialize $Y^{(t)}\leftarrow Y^{(t-1)}$, then apply the drafting and verification outcomes:
\[
y_i^{(t)} =
\begin{cases}
\hat y_i^{(t)}, & i \in \mathcal{D}_t,\\
\texttt{[MASK]}, & i \in \mathcal{R}_t,\\
y_i^{(t-1)}, & \text{otherwise}.
\end{cases}
\]
Equivalently, $\mathcal{U}_t = (\mathcal{U}_{t-1}\cup\mathcal{D}_t)\setminus\mathcal{R}_t$ and $\mathcal{M}_t=[L]\setminus\mathcal{U}_t$.

\textbf{Seed selection.}
A seed selection algorithm chooses the next seed set $\mathcal{S}_t\subseteq\mathcal{U}_t$, which will be verified at step $t+1$.
Decoding terminates when $\mathcal{M}_t=\emptyset$ or when a step budget is reached.
\section{Flip-Flop Oscillations}
\label{sec:flipflop}

\begin{figure*}[!t]
\centering
\includegraphics[width=0.85\textwidth]{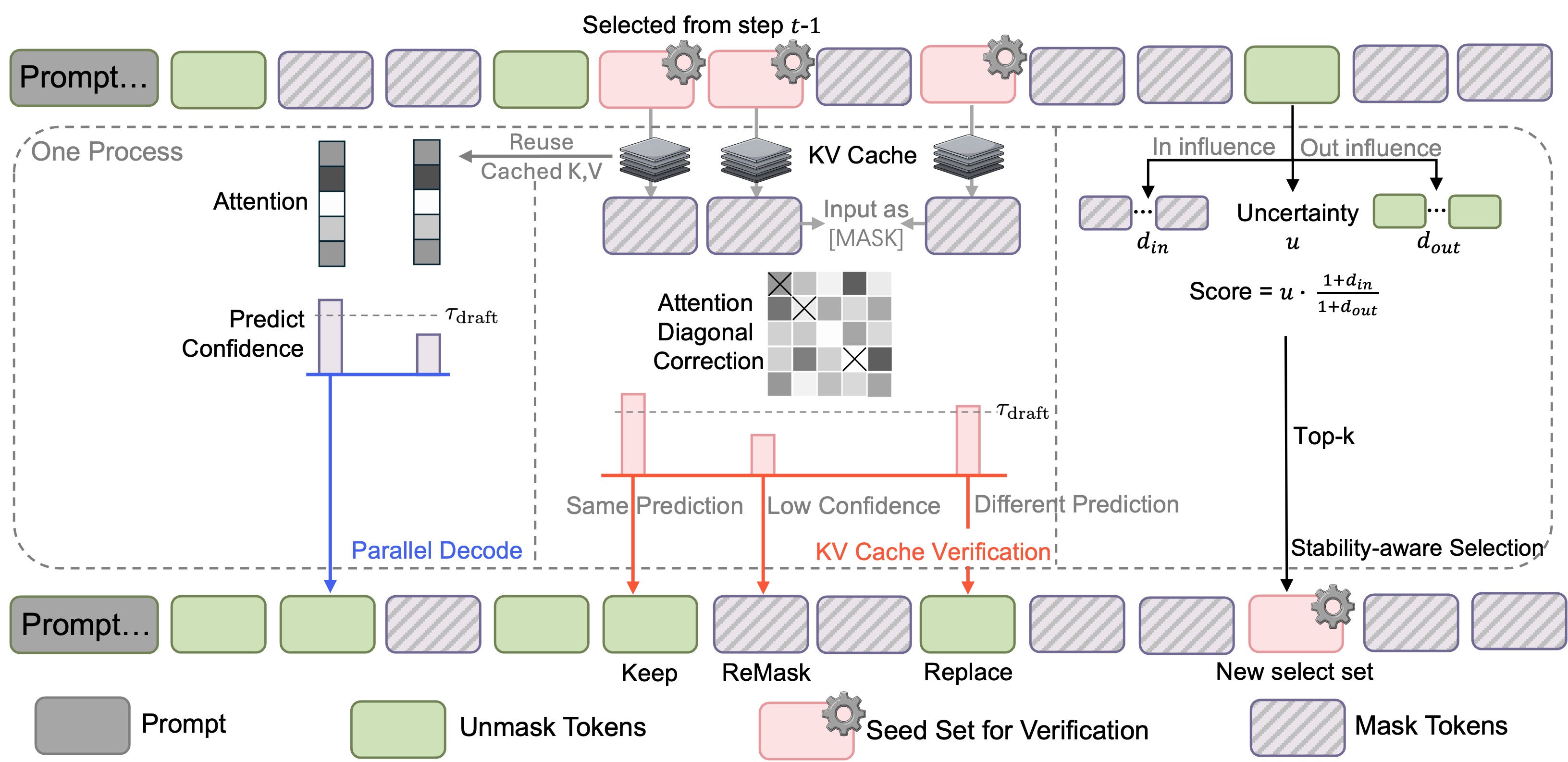}
\caption{
Overview of our single-pass revocable diffusion decoding.
At step $t$, the model drafts multiple masked positions in parallel and verifies a seed set selected from step $t\!-\!1$.
Verification masks the seeds in the input but injects their cached $K,V$ states so non-seed queries see an unchanged context.
An attention diagonal correction is applied at the masked seed positions to prevent self-leakage and enable re-prediction from the surrounding context.
Each seed is then updated by \textsc{Keep}, \textsc{Replace}, or \textsc{ReMask}, and a stability-aware score based on uncertainty and in/out influence selects the next seed set via top-$k$.
}
\label{fig:overview}
\end{figure*}

Revocable decoding improves quality by allowing previously unmasked tokens to be remasked and
refined under richer context. 
However, in practice we observe a pathological behavior that we call
flip-flop oscillations, which can substantially slow down inference without providing meaningful corrections.

\textbf{Definition.}
During revocable diffusion decoding, a position can be unmasked, later remasked to \texttt{[MASK]}, and then unmasked again.
Fix a position $i$ and record the token predicted each time $i$ transitions from \texttt{[MASK]} to a concrete token.
We say a flip-flop event occurs at position $i$ if two consecutive such unmaskings predict the same token, meaning that the intermediate remask does not change the model's discrete decision.
Equivalently, a flip flop corresponds to an ineffective remask action that is later undone by restoring the same token.
Let $F_i$ denote the number of flip flop events at position $i$, and define the total flip flop count for the sequence as
$F=\sum_{i=1}^{L} F_i$.
In our empirical study, flip flops dominate revocation in existing methods: around $99\%$ of Saber’s ReMask operations are ineffective, and for WINO the ineffective fraction remains close to $90\%$ across datasets (Section~\ref{sec:flip-flop}).

\textbf{Flip-flop oscillations slow down decoding.}
Flip-flop oscillations reduce efficiency even when revocation rarely changes the final discrete prediction.
We highlight two sources of overhead:
\begin{enumerate}
    \item \textbf{Remasking weakens the conditioning context for parallel drafting.} When a previously unmasked token is reset to \texttt{[MASK]}, the input replaces a content bearing embedding with an uninformative placeholder, so other positions that attend to it temporarily lose semantic signal.
    Empirically, many revoked positions are later repredicted as exactly the same token, which means this transient context deletion often provides no corrective benefit. 
    Nevertheless, it still lowers confidence at remaining masked positions, typically shrinking the drafted set $\mathcal{D}_t$ and slowing the net rate at which new tokens can be committed.
    \item \textbf{Remasking consumes the decoding budget and reduces net progress.} From the state update $\mathcal{U}_t = (\mathcal{U}_{t-1}\cup \mathcal{D}_t)\setminus \mathcal{R}_t$, the net expansion per step is $|\mathcal{U}_t|-|\mathcal{U}_{t-1}| = |\mathcal{D}_t|-|\mathcal{R}_t|$. Each flip-flop event increases $|\mathcal{R}_t|$ without producing a new assignment and forces a later step to spend an unmask slot merely to restore the same token, thereby wasting iterations under any fixed step or unmask budget. We formalize this overhead in Appendix~A (Lemma~\ref{lem:flipflop-overhead}), proving that any additional remask or flip-flop event increases the required number of decoding steps under a fixed per-step unmask budget.
\end{enumerate}

\section{Method}
We propose \textsc{COVER}, an in-place single-pass verification mechanism for revocable parallel diffusion decoding (Figure~\ref{fig:overview}).
At each step, \textsc{COVER} performs parallel drafting on masked positions while simultaneously verifying a small seed set from the previous step.
Verification is implemented by masking the selected positions in the input but overriding their key value states with cached activations, together with a diagonal correction that prevents self-leakage.
This dual view computation preserves a stable conditioning context for drafting, and yields faithful leave-one-out checks for the verified positions.
Each verified position is then assigned \textsc{Keep}, \textsc{Replace}, or \textsc{ReMask} to avoid ineffective flip-flop cycles.
Finally, a stability aware seed selection rule prioritises high-risk positions, and an adaptive revision rate controls how many positions are verified per step.

\subsection{Dual-View Feed Forward through KV Cache Override}
\label{sec:kv_verify}
At denoising step $t$, \textsc{COVER} takes as input the current partial state $Y^{(t-1)}$
and a seed set $\mathcal{S}_{t-1}\subseteq \mathcal{U}_{t-1}$ selected at the end of step $t-1$
(Sec.~\ref{sec:seed}).
Positions in $\mathcal{S}_{t-1}$ are previously unmasked tokens scheduled to be rechecked at step $t$, and the verification rule will optionally output a remask set $\mathcal{R}_t\subseteq \mathcal{S}_{t-1}$.

Our goal is to obtain two types of predictions within a single forward pass:
(i) a faithful leave one out style verification distribution at each seed position $i\in \mathcal{S}_{t-1}$,
where the model re-predicts $y^{(t-1)}_i$ from surrounding context with the input at $i$ set to \texttt{[MASK]};
and (ii) stable drafting distributions for all non seed positions, including masked positions to be drafted, whose queries should still condition on the same seed representations as in step $t-1$.

\textbf{Masked seed input for verification.}
We first construct a verification input by masking only the seed positions:
\[
\tilde y^{(t-1)}_j=
\begin{cases}
\texttt{[MASK]}, & j\in\mathcal{S}_{t-1},\\
y^{(t-1)}_j, & \text{otherwise}.
\end{cases}
\]
Let $(Q_\ell,K_\ell,V_\ell)$ denote the query, key, and value states computed from
$\tilde Y^{(t-1)}$ at transformer layer $\ell$.

\textbf{KV cache override yields a stable drafting view.}
Naively masking $\mathcal{S}_{t-1}$ would delete their information from the context of every other query,
weakening parallel drafting.
\textsc{COVER} preserves the seed context by overriding only the memory columns at the seed positions with
their cached key value states from step $t-1$.
Concretely, when $\mathcal{S}_{t-1}$ is selected, we cache the per layer key and value states
$\{(\bar K^{(t-1)}_{\ell,j},\bar V^{(t-1)}_{\ell,j})\}_{j\in\mathcal{S}_{t-1}}$.
At step $t$, we form an overridden memory $(K'_\ell,V'_\ell)$ by
\[
(K'_{\ell,j},V'_{\ell,j})=
\begin{cases}
(\bar K^{(t-1)}_{\ell,j},\bar V^{(t-1)}_{\ell,j}), & j\in\mathcal{S}_{t-1},\\
(K_{\ell,j},V_{\ell,j}), & \text{otherwise}.
\end{cases}
\]
We then run attention once for all positions using this overridden memory:
\[
O^{\mathrm{ovr}}_\ell
=
\mathrm{Attn}(Q_\ell,K'_\ell,V'_\ell)
=
\mathrm{softmax}\!\left(\frac{Q_\ell K_\ell'^{\top}}{\sqrt d}\right)V'_\ell,
\]
denote the resulting output vector at position $i$ by $o^{\mathrm{ovr}}_{\ell,i}$.
Here, $d$ is the key dimension per attention head.
For any query position $i\notin\mathcal{S}_{t-1}$, the seed columns in memory are exactly the cached
representations from step $t-1$, so non seed queries continue to condition on a stable seed context even though the seed tokens are masked in the input.

\textbf{Diagonal correction for faithful verification.}
The KV override attention run above is sufficient for stable drafting, but it is not faithful for verifying a seed position $i\in\mathcal{S}_{t-1}$.
Although $y^{(t-1)}_i$ is masked in $\tilde Y^{(t-1)}$, naively overriding the seed columns would still place the cached pair $(k'_i,v'_i)$ on the diagonal column $j=i$, creating a direct self conditioning path that can leak the token being verified.

To obtain a leave one out view for each seed query $i$, we keep the overridden columns for all $j\neq i$ but restore the diagonal column to the key and value computed from the masked input:
\[
(k^{(i)}_j, v^{(i)}_j)=
\begin{cases}
(k_i, v_i), & j=i,\\
(k'_j, v'_j), & j\neq i.
\end{cases}
\]
This modification changes only the diagonal attention score in row $i$.
However, since attention probabilities are normalized by a softmax, changing the diagonal score also rescales the entire attention distribution in that row.
We therefore apply a post-hoc diagonal correction.
Let $\alpha_i$ be the diagonal attention weight under the overridden run and let $\delta_i$ be the diagonal score shift after restoring $(k_i,v_i)$.
Then the corrected attention weights are obtained by a single row wise rescaling:
\[
w_{i,j}=\frac{w^{\mathrm{ovr}}_{i,j}}{r_i}\ (j\neq i),\qquad
w_{i,i}=\frac{w^{\mathrm{ovr}}_{i,i}\exp(\delta_i)}{r_i},
\]
\[
r_i=1+\alpha_i\big(\exp(\delta_i)-1\big).
\]
We then update the attention output at $i$ accordingly, while leaving all non seed queries unchanged.
The full derivation and implementation details are provided in Appendix~\ref{app:diag-corr}.

\subsection{Drafting: Multiple Token Unmasking}
\label{sec:drafting}

We adopt the parallel drafting scheme described in Sec.~\ref{sec:prelim_revocable}.
At decoding step $t$, let $\mathcal{M}^{(t)}$ be the set of masked positions, and let $c_i^{(t)}$ denote the confidence score (as defined in Sec.~\ref{sec:prelim_revocable}) for each $i\in\mathcal{M}^{(t)}$.
We draft new tokens by selecting all masked positions whose confidence exceeds a threshold:
\[
\mathcal{D}^{(t)} \;=\; \big\{\, i \in \mathcal{M}^{(t)} \;:\; c_i^{(t)} > \tau_{\mathrm{draft}} \,\big\}.
\]
To avoid overly aggressive updates within a single step, which can introduce many errors, we additionally cap the number of drafted positions by a maximum budget $B$.
When $|\mathcal{D}^{(t)}|>B$, we keep only the $B$ positions with the largest confidence values.

\textbf{Revision outcomes for previously verified positions.}
In the same forward pass, the model re-predicts the seed positions
$\mathcal{S}_{t-1}$ from the previous step under the verification view.
For each $i \in \mathcal{S}_{t-1}$, let
$\tilde{y}^{(t)}_i = \arg\max_v p^{(t)}_i(v)$ and
$\tilde{c}^{(t)}_i = p^{(t)}_i(\tilde{y}^{(t)}_i)$.
We apply a three-way revision rule (\textsc{Keep}/\textsc{Replace}/\textsc{ReMask}):
\[
y^{(t)}_i
=
\left\{
\begin{array}{@{}l@{\quad}l@{}}
y^{(t-1)}_i, & \tilde{y}^{(t)}_i = y^{(t-1)}_i,\\[4pt]
\tilde{y}^{(t)}_i, & \tilde{y}^{(t)}_i \neq y^{(t-1)}_i, \quad \tilde{c}^{(t)}_i > \tau_{\mathrm{draft}},\\[4pt]
\texttt{[MASK]}, & \text{otherwise.}
\end{array}
\right.
\]
The revision rule is designed to avoid unnecessary revocations.
\textsc{Keep} skips a ReMask when the verified token matches the current assignment, and \textsc{Replace} commits a confident correction in place.
Both actions reduce the remask set
$\mathcal{R}_t := \{i\in\mathcal{S}_{t-1} : y^{(t)}_i=\texttt{[MASK]}\}$, thereby suppressing flip-flop revisions and preserving net progress under a fixed unmasking budget.

Decoding terminates once $Y^{(t)}$ contains no \texttt{[MASK]} tokens; for Instruct models, we stop early when the end-of-sequence token is generated.

\subsection{Stability Aware Seed Selection and Adaptive Revision Rate}
\label{sec:seed}
Verifying too many positions in parallel can be harmful when their cached representations
are unstable, as overriding such KV states may perturb the predictions of other tokens.
We therefore restrict verification to a small, adaptively chosen seed set
$\mathcal{S}_t \subseteq \mathcal{U}_t$.

\textbf{Stability aware seed scoring.}
For each $j \in \mathcal{U}_t$, we score its verification priority by combining
(i) risk of being incorrect,
(ii) how much the remaining masked positions rely on it as context, and
(iii) how likely its cached KV state will drift after the current draft.

Let $A^{(t)} \in [0,1]^{L \times L}$ denotes the last layer attention matrix (averaged over heads) from the current forward pass.
We define three signals:
\[
u^{(t)}(j) = -\log p^{(t)}_j\!\left(y^{(t)}_j\right),
\]
\[
d^{(t)}_{\mathrm{in}}(j) = \sum_{q \in \mathcal{M}_t} A^{(t)}_{q \rightarrow j},
\]
\[
d^{(t)}_{\mathrm{out}}(j) = \sum_{i \in \mathcal{D}_t} A^{(t)}_{j \rightarrow i}.
\]
Here, $u^{(t)}(j)$ is the uncertainty of the currently assigned token at position $j$,
so larger values indicate higher verification risk.
The term $d^{(t)}_{\mathrm{in}}(j)$ measures the downstream influence of $j$ on the not yet generated tokens,
namely the total attention mass from still masked queries to $j$.
The term $d^{(t)}_{\mathrm{out}}(j)$ measures the draft sensitivity of $j$,
namely how strongly $j$ attends to newly drafted positions; a large value suggests that
the representation at $j$ is likely to change after drafting, making KV reuse less stable.

We combine them as
\[
\label{score}
\mathrm{Score}^{(t)}(j)
=
u^{(t)}(j)\,
\frac{1 + d^{(t)}_{\mathrm{in}}(j)}
     {1 + d^{(t)}_{\mathrm{out}}(j)}.
\]
Thus, we prioritise seeds that are uncertain and influential, while penalising those whose cached states are
likely to drift under the newly introduced context.

\textbf{Adaptive revision rate.}
Rather than fixing the seed number for revision per step, we adapt it to the empirical score distribution.
Let $n_t = |\mathcal{U}_t|$ and $\{s_j\}_{j \in \mathcal{U}_t}$ be the scores.
We define the empirical cumulative distribution function~(CDF) 
\[
F_t(s)=\frac{1}{n_t} \sum_{j \in \mathcal{U}_t} \mathbb{I}\{ s_j \le s \}.
\]
Define the empirical mean $\bar{s}_t = \frac{1}{n_t} \sum_j s_j$ and tail mass
\[
\pi_t = 1 - F_t(\bar{s}_t).
\]
We then set the verification number as
$|\mathcal{S}_t|
\;=\;
\lceil \sqrt{n_t\, \pi_t} \;\rceil,$
and select the top-scoring positions accordingly.
To avoid stale KV reuse and repeated edits, we exclude positions from seed selection if they were selected in the previous step or have reached the per-position ReMask budget $M_{\mathrm{remask}}$.

\section{Experiment}

\begin{table*}[!t]
\centering
\caption{
Main results across four benchmarks and multiple diffusion models.
We report accuracy (\%) and the average number of decoding steps (lower is better).
Speed denotes relative runtime (baseline = 1.00$\times$), where larger values are faster.
Rows with a pink background indicate ours, and the best result within each block is \textbf{bolded}.
}
\label{tab:performance_steps_colored}
\renewcommand{\arraystretch}{1.2}
\resizebox{\textwidth}{!}{
\begin{tabular}{lcl ccc ccc ccc ccc}
\toprule
\multirow{2}{*}{\textbf{Model}} & \multirow{2}{*}{\textbf{Len}} & \multirow{2}{*}{\textbf{Method}} & \multicolumn{3}{c}{\textbf{HumanEval}} & \multicolumn{3}{c}{\textbf{MBPP}} & \multicolumn{3}{c}{\textbf{GSM8K}} & \multicolumn{3}{c}{\textbf{MATH500}} \\
\cmidrule(lr){4-6} \cmidrule(lr){7-9} \cmidrule(lr){10-12} \cmidrule(lr){13-15}
 & & 
 & \textbf{Acc.(\%) $\uparrow$} & \textbf{Steps $\downarrow$} & \textbf{Speed$\uparrow$} 
 & \textbf{Acc.(\%) $\uparrow$} & \textbf{Steps $\downarrow$} & \textbf{Speed$\uparrow$} 
 & \textbf{Acc.(\%) $\uparrow$} & \textbf{Steps $\downarrow$} & \textbf{Speed$\uparrow$} 
 & \textbf{Acc.(\%) $\uparrow$} & \textbf{Steps $\downarrow$} & \textbf{Speed $\uparrow$} \\

\midrule

\multirow{8}{*}{\textbf{LLaDA-Base-8B}} 
 & \multirow{3}{*}{256} & baseline 
 & 33.54 & 256 & 1.00$\times$ 
 & 40.80 & 256 & 1.00$\times$ 
 & 70.58 & 256 & 1.00$\times$ 
 & 30.60 & 256 & 1.00$\times$ \\
 & & Saber 
 & 32.93\g{-0.61} & 93.75\g{-162.25} & 2.67$\times$ 
 & 41.20\g{+0.40} & 107.11\g{-148.89} & 2.34$\times$ 
 & 69.62\g{-0.96} & 135.36\g{-120.64} & 1.72$\times$ 
 & 31.80\g{+1.20} & 123.91\g{-132.09} & 2.02$\times$ \\
 & & WINO 
 & 33.54\g{+0.00} & 71.43\g{-184.57} & 2.19$\times$ 
 & 41.80\g{+1.00} & 82.08\g{-173.92} & 1.55$\times$ 
 & 70.43\g{-0.15} & 106.76\g{-149.24} & 1.09$\times$ 
 & 30.80\g{+0.20} & 102.30\g{-153.70} & 1.16$\times$ \\
 & & \cp \textbf{COVER} 
 & \cp \textbf{34.76}\g{+1.22} & \cp \textbf{46.40}\g{-209.60} & \cp \textbf{2.98}$\times$  
 & \cp \textbf{42.40}\g{+1.60} & \cp \textbf{75.37}\g{-180.63} & \cp \textbf{2.42}$\times$  
 & \cp \textbf{70.96}\g{+0.38} & \cp \textbf{97.21}\g{-158.79} & \cp \textbf{1.78}$\times$  
 & \cp \textbf{32.00}\g{+1.40} & \cp \textbf{87.66}\g{-168.34} & \cp \textbf{2.08}$\times$  \\ \cdashline{2-15}
 
 & \multirow{3}{*}{512} & baseline 
 & 32.93 & 512 & 1.00$\times$ 
 & 39.80 & 512 & 1.00$\times$ 
 & 70.74 & 512 & 1.00$\times$ 
 & 31.80 & 512 & 1.00$\times$ \\
 & & Saber 
 & 33.54\g{+0.61} & 114.81\g{-397.19} & 4.45$\times$ 
 & 40.20\g{+0.40} & 167.99\g{-344.01} & 3.05$\times$ 
 & 70.13\g{-0.61} & 166.08\g{-345.92} & 3.07$\times$ 
 & 30.20\g{-1.60} & 181.98\g{-330.02} & 2.80$\times$ \\
 & & WINO 
 & 34.76\g{+1.83} & 103.18\g{-408.82} & 2.75$\times$ 
 & 40.40\g{+0.60} & 125.35\g{-386.65} & 1.92$\times$ 
 & 70.05\g{-0.69} & 128.30\g{-383.70} & 1.49$\times$ 
 & 31.60\g{-0.20} & 139.45\g{-372.55} & 1.76$\times$ \\
 & & \cp \textbf{COVER}
 & \cp \textbf{35.37}\g{+2.44} & \cp \textbf{70.90}\g{-441.10} & \cp \textbf{4.64}$\times$  
 & \cp \textbf{41.00}\g{+1.20} & \cp \textbf{94.03}\g{-417.97} & \cp \textbf{3.82}$\times$  
 & \cp \textbf{71.42}\g{+0.68} & \cp \textbf{105.15}\g{-406.85} & \cp \textbf{3.17}$\times$  
 & \cp \textbf{32.40}\g{+0.60} & \cp \textbf{124.29}\g{-387.71} & \cp \textbf{2.91}$\times$  \\
\midrule

\multirow{8}{*}{\textbf{LLaDA-Ins-8B}} 
 & \multirow{3}{*}{256} & baseline 
 & 39.63 & 256 & 1.00$\times$ 
 & 37.20 & 256 & 1.00$\times$ 
 & 74.91 & 256 & 1.00$\times$ 
 & 30.60 & 256 & 1.00$\times$ \\
 & & Saber 
 & 40.24\g{+0.61} & 122.55\g{-133.45} & 2.09$\times$ 
 & 37.60\g{+0.40} & 114.86\g{-141.14} & 2.25$\times$ 
 & 75.66\g{+0.75} & 120.48\g{-135.52} & 2.47$\times$ 
 & 31.80\g{+1.20} & 130.66\g{-125.34} & 2.25$\times$ \\
 & & WINO 
 & 39.63\g{+0.00} & \textbf{88.64}\g{-167.36} & \textbf{2.89}$\times$ 
 & 36.20\g{-1.00} & 87.00\g{-169.00} & 1.64$\times$ 
 & \textbf{77.33}\g{+2.42} & \textbf{49.47}\g{-206.53} & 3.01$\times$ 
 & 32.40\g{+1.80} & 75.49\g{-180.51} & 1.96$\times$ \\
 & & \cp \textbf{COVER}
 & \cp \textbf{40.24}\g{+0.61} & \cp 96.16\g{-159.84} & \cp 2.66$\times$ 
 & \cp \textbf{39.00}\g{+1.80} & \cp \textbf{69.75}\g{-186.25} & \cp \textbf{2.35}$\times$ 
 & \cp 77.26\g{+2.35} & \cp 51.65\g{-204.35} & \cp \textbf{3.25}$\times$ 
 & \cp \textbf{32.80}\g{+2.20} & \cp \textbf{68.05}\g{-187.95} & \cp \textbf{2.52}$\times$ \\
  \cdashline{2-15}
  
 & \multirow{3}{*}{512} & baseline 
 & 46.95 & 512 & 1.00$\times$ 
 & 37.60 & 512 & 1.00$\times$ 
 & 79.08 & 512 & 1.00$\times$ 
 & 37.00 & 512 & 1.00$\times$ \\
 & & Saber 
 & 47.56\g{+0.61} & 191.29\g{-320.71} & 2.58$\times$ 
 & 37.40\g{-0.20} & 168.44\g{-343.56} & 2.96$\times$ 
 & 79.61\g{+0.53} & 128.75\g{-383.25} & 3.87$\times$ 
 & 38.20\g{+1.20} & 170.12\g{-341.88} & 2.92$\times$ \\
 & & WINO 
 & 47.56\g{+0.61} & 178.44\g{-333.56} & 1.59$\times$ 
 & 38.40\g{+0.80} & 131.29\g{-380.71} & 1.94$\times$ 
 & 79.45\g{+0.37} & 74.96\g{-437.04} & 3.61$\times$ 
 & 39.20\g{+2.20} & 115.69\g{-396.31} & 2.33$\times$ \\
 & & \cp \textbf{COVER}
 & \cp \textbf{48.17}\g{+1.22} & \cp \textbf{132.63}\g{-379.37} & \cp \textbf{2.72}$\times$
 & \cp \textbf{38.80}\g{+1.20} & \cp \textbf{115.24}\g{-396.76} & \cp \textbf{3.08}$\times$ 
 & \cp \textbf{79.98}\g{+0.90} & \cp \textbf{65.47}\g{-446.53} & \cp \textbf{5.52}$\times$ 
 & \cp \textbf{40.80}\g{+3.80} & \cp \textbf{99.97}\g{-412.03} & \cp \textbf{3.62}$\times$ \\
\midrule

\multirow{8}{*}{\textbf{LLaDA-1.5-8B}} 
 & \multirow{3}{*}{256} & baseline 
 & 40.24 & 256 & 1.00$\times$ 
 & \textbf{38.60} & 256 & 1.00$\times$ 
 & \textbf{82.11} & 256 & 1.00$\times$ 
 & 35.20 & 256 & 1.00$\times$ \\
 & & Saber 
 & 40.24\g{+0.00} & 125.23\g{-130.77} & 2.04$\times$
 & 37.60\g{-1.00} & 124.42\g{-131.58} & 2.38$\times$ 
 & 81.43\g{-0.68} & 107.32\g{-148.68} & 2.33$\times$ 
 & 35.80\g{+0.60} & 132.51\g{-123.49} & 2.34$\times$ \\
 & & WINO 
 & 40.24\g{+0.00} & 92.08\g{-163.92} & 2.78$\times$ 
 & 37.80\g{-0.80} & 89.99\g{-166.01} & 1.65$\times$ 
 & 81.12\g{-0.99} & 85.03\g{-170.97} & 2.44$\times$ 
 & 34.60\g{-0.60} & 96.73\g{-159.27} & 2.05$\times$ \\
 & & \cp \textbf{COVER}
 & \cp \textbf{42.07}\g{+1.83} & \cp \textbf{87.02}\g{-168.98} & \cp \textbf{2.94}$\times$
 & \cp 37.80\g{-0.80} & \cp \textbf{70.52}\g{-185.48} & \cp \textbf{2.45}$\times$ 
 & \cp 81.73\g{-0.38} & \cp \textbf{63.24}\g{-192.76} & \cp \textbf{2.65}$\times$ 
 & \cp \textbf{36.00}\g{+0.80} & \cp \textbf{66.17}\g{-189.83} & \cp \textbf{2.66}$\times$ \\
 \cdashline{2-15}
  
 & \multirow{3}{*}{512} & baseline 
 & \textbf{48.78} & 512 & 1.00$\times$ 
 & 38.20 & 512 & 1.00$\times$ 
 & 82.34 & 512 & 1.00$\times$ 
 & 40.00 & 512 & 1.00$\times$ \\
 & & Saber 
 & 48.17\g{-0.61} & 225.11\g{-286.89} & 2.22$\times$ 
 & 38.40\g{+0.20} & 185.50\g{-326.50} & 2.70$\times$ 
 & 81.35\g{-0.99} & 117.53\g{-394.47} & 4.29$\times$ 
 & 41.00\g{+1.00} & 213.55\g{-298.45} & 2.37$\times$ \\
 & & WINO 
 & \textbf{48.78}\g{+0.00} & 205.16\g{-306.84} & 1.39$\times$
 & 38.00\g{-0.20} & 139.51\g{-372.49} & 1.84$\times$ 
 & 82.03\g{-0.31} & 74.73\g{-437.27}  & 3.72$\times$ 
 & 41.40\g{+1.40} & 122.32\g{-389.68} & 2.59$\times$ \\
 & & \cp \textbf{COVER}
 & \cp \textbf{48.78}\g{+0.00} & \cp \textbf{140.44}\g{-371.56} & \cp \textbf{2.66}$\times$
 & \cp \textbf{39.40}\g{+1.20} & \cp \textbf{123.48}\g{-388.52} & \cp \textbf{2.83}$\times$  
 & \cp \textbf{82.56}\g{+0.22} & \cp \textbf{63.84}\g{-448.16} & \cp \textbf{5.41}$\times$  
 & \cp \textbf{42.60}\g{+2.60} & \cp \textbf{111.05}\g{-400.95}  & \cp \textbf{2.82}$\times$  \\

 \midrule
\multirow{8}{*}{\textbf{Dream-Ins-7B}} 
 & \multirow{3}{*}{256} & baseline 
 & 54.88 & 256 & 1.00$\times$ 
 & 56.80 & 256 & 1.00$\times$ 
 & 75.82 & 256 & 1.00$\times$ 
 & 40.00 & 256 & 1.00$\times$ \\
 & & Saber 
 & 54.88\g{+0.00} & 186.16\g{-69.84} & 1.45$\times$ 
 & 57.00\g{+0.20} & 207.17\g{-48.83} & 1.31$\times$ 
 & 71.04\g{-4.78} & 203.55\g{-52.45} & 1.34$\times$ 
 & 42.00\g{+2.00} & 172.76\g{-83.24} & 1.56$\times$ \\
 & & WINO 
 & 54.88\g{+0.00} & 83.76\g{-172.24} & 2.70$\times$ 
 & 57.00\g{+0.20} & 57.08\g{-198.92} & 4.23$\times$ 
 & 74.00\g{-1.82} & 83.00\g{-173.00} & 3.01$\times$ 
 & 42.40\g{+2.40} & 139.64\g{-116.36} & 1.77$\times$ \\
 & & \cp \textbf{COVER} 
 & \cp \textbf{55.49}\g{+0.61} & \cp \textbf{71.56}\g{-184.44} & \cp \textbf{2.77}$\times$  
 & \cp \textbf{58.00}\g{+1.20} & \cp \textbf{42.35}\g{-213.65} & \cp \textbf{4.36}$\times$  
 & \cp \textbf{78.92}\g{+3.10} & \cp \textbf{52.11}\g{-203.89} & \cp \textbf{3.51}$\times$  
 & \cp \textbf{43.60}\g{+3.60} & \cp \textbf{105.30}\g{-150.70} & \cp \textbf{1.84}$\times$  \\
 \cdashline{2-15}
 
 & \multirow{3}{*}{512} & baseline 
 & 54.27 & 512 & 1.00$\times$ 
 & 57.20 & 512 & 1.00$\times$ 
 & 76.88 & 512 & 1.00$\times$ 
 & 42.40  & 512 & 1.00$\times$ \\
 & & Saber 
 & 52.44\g{-1.83} & 448.85\g{-63.15} & 1.24$\times$ 
 & 55.40\g{-1.80} & 461.57\g{-50.43} & 1.22$\times$ 
 & 72.10\g{-4.78} & 450.30\g{-61.70} & 1.26$\times$ 
 & 45.00\g{+2.60} & 352.11\g{-159.89} & 1.59$\times$ \\
 & & WINO 
 & 53.05\g{-1.22} & 99.37\g{-412.63} & 4.73$\times$ 
 & 56.20\g{-1.00} & 62.11\g{-449.89} & 7.82$\times$ 
 & 74.07\g{-2.81} & 96.98\g{-415.02} & 5.05$\times$ 
 & 44.40 \g{+2.00} & 204.58 \g{-307.42} & 2.38$\times$ \\
 & & \cp \textbf{COVER}
 & \cp \textbf{56.10}\g{+1.83} & \cp \textbf{69.72}\g{-442.28} & \cp \textbf{5.91}$\times$  
 & \cp \textbf{57.60}\g{+0.40} & \cp \textbf{35.84}\g{-476.16} & \cp \textbf{11.64}$\times$  
 & \cp \textbf{79.30}\g{+2.42} & \cp \textbf{51.48}\g{-460.52} & \cp \textbf{8.48}$\times$  
 & \cp \textbf{45.40}\g{+3.00} & \cp \textbf{143.91}\g{-368.09} & \cp \textbf{2.95}$\times$  \\
\bottomrule
\end{tabular}
}
\label{MainResults}
\end{table*}

\subsection{Experimental Settings}
\textbf{Implementation Details.}
We conduct experiments on four different dLLMs, namely LLaDA-Base-8B, LLaDA-Ins-8B~\citep{Nie2025LargeLD}, LLaDA-1.5-8B~\citep{Zhu2025LLaDA1V}, and Dream-Ins-7B~\citep{Ye2025Dream7D}. 
For consistency and robustness, we set the decoding temperature to zero and greedily unmask the token with the lowest entropy at each step. 
We adopt the semi-autoregressive sampling strategy~\citep{Nie2025LargeLD}, which segments the output sequence into a set of blocks that are generated sequentially in a left-to-right order.
In our evaluation, we set the generation length to 256 and 512, use a block length of 32 for Dream-Ins-7B, and use a block length of 64 for the other models.
We fix the per-step drafting budget to $B=15$, set the per-position \textsc{ReMask} budget $M_{\mathrm{remask}}$ around $5$ in practice, and restrict the drafting confidence threshold to the conservative set $\tau_{\mathrm{draft}} \in \{0.7,0.8,0.9\}$.
All experiments are conducted on four NVIDIA H200 GPUs.

\textbf{Datasets.}
We evaluate our approach on four benchmarks covering mathematical reasoning and code generation. For mathematical reasoning, we consider GSM8K~\citep{Cobbe2021TrainingVT}, which contains grade-school–level word problems, and the more demanding MATH500~\citep{Lightman2023LetsVS}, composed of competition-style mathematics questions. For code generation, we benchmark on MBPP~\citep{Austin2021ProgramSW}, which focuses on introductory Python programming tasks, and HumanEval~\citep{Chen2021EvaluatingLL}, a collection of hand-written problems designed to assess program synthesis ability.
All ``Instruct'' variant models are evaluated in the zero-shot setting, while standard few-shot protocols are adopted on the LLaDA-Base-8B model specific to each benchmark: zero-shot for HumanEval, three-shot for MBPP, four-shot for MATH500, and eight-shot for GSM8K~\cite{Zhu2025LatentRD}.

\textbf{Metrics.}
To evaluate the effectiveness and efficiency of our approach, we utilise three primary metrics: Acc., Steps, and Speed.
For performance assessment, we report standard accuracy on mathematical reasoning benchmarks and the pass@1 rate for code generation tasks.
Under deterministic decoding, these scores are computed from a single run over the full benchmark split.
Efficiency is quantified by tracking the average number of decoding steps required per sample across the entire dataset.
Finally, we measure relative speedup by calculating the ratio of the total inference time: specifically, the total runtime of standard greedy decoding divided by the total runtime of the parallel diffusion decoding methods.

To characterise flip--flop behaviour, we additionally report revision-level statistics: (1) No.~Total Revision, the total number of revision operations; (2) No.~Eff.~Revision, the number of effective revisions that change the assigned token; and (3) Effective Revision Ratio $:= \text{No.~Eff.~Revision} / \text{No.~Total Revision}$.\footnote{
For Saber and WINO, revisions correspond to \textsc{ReMask} operations. For \textsc{COVER}, revisions include both \textsc{ReMask} and \textsc{Replace}; \textsc{Replace} is counted as effective because it modifies the previously assigned token in place.
}
The remaining ineffective revisions correspond to flip--flop events (Section~\ref{sec:flipflop}).

\textbf{Baselines.}
We evaluate our approach relative to standard greedy diffusion decoding and two training free revocable parallel diffusion decoding baselines, WINO~\footnote{
WINO:\url{https://github.com/Feng-Hong/WINO-DLLM}}~\citep{hong2025wide} and Saber~\footnote{Saber: \url{https://github.com/zhaoyMa/Saber}} \citep{dong2025saber}.
For both baselines, we use the authors' original implementations and evaluate them under the same experimental settings as ours for a fair comparison.

\subsection{Main Results}
\textbf{Performance on Benchmarks.}
Table~\ref{MainResults} shows that \textsc{COVER} generally preserves or improves task performance across four benchmarks and multiple diffusion models while substantially reducing decoding steps.
Across both code generation (HumanEval, MBPP) and math reasoning (GSM8K, MATH500), \textsc{COVER} achieves the strongest or near-strongest accuracy in each model and length setting, thereby avoiding the noticeable quality degradation of naive multi-token unmasking.
The gains are particularly clear on code benchmarks: 
for LLaDA-Base-8B, \textsc{COVER} improves HumanEval from 32.93\% to 35.37\% at length 512 and MBPP from 40.80\% to 42.40\% at length 256; 
for LLaDA-Ins-8B at length 256, it improves HumanEval from 39.63\% to 40.24\% and MBPP from 37.20\% to 39.00\%.
We also observe improvements in several reasoning settings. 
For example, on Dream-Ins-7B at length 512, \textsc{COVER} reaches 79.30\% on GSM8K and 45.40\% on MATH500, exceeding the baseline and surpassing the other two revocable methods.
Overall, these results indicate that \textsc{COVER} enables aggressive parallel drafting with faithful in place verification, generally preserving or improving quality rather than simply trading accuracy for speed.

\textbf{Efficiency and Decoding Speed.}
Beyond accuracy, \textsc{COVER} substantially reduces the number of diffusion steps and delivers faster end-to-end decoding.
Within each model block, the speedups are measured relative to the standard one token per step baseline (Speed = 1.00$\times$), and \textsc{COVER} is consistently among the fastest methods while maintaining competitive or best accuracy.
For LLaDA-Base-8B at length 256, \textsc{COVER} cuts HumanEval steps from 256 to 46.40 (2.98$\times$ speedup) and reduces MATH500 steps to 87.66 (2.08$\times$).
For LLaDA-Ins-8B at length 512, \textsc{COVER} achieves a 5.52$\times$ speedup on GSM8K with only 65.47 steps, while also improving accuracy.
On Dream-Ins-7B, the acceleration is even more pronounced, reaching up to 11.64$\times$ speedup on MBPP at length 512.
These efficiency gains support the central claim of \textsc{COVER}: by reusing cached representations to stabilise parallel drafting and verifying only a small set of high-risk positions, we improve net denoising progress per step and avoid spending steps on ineffective oscillations.

We provide additional robustness evaluations in the appendix.
Appendix~\ref{sec:stochastic_decoding} and Table~\ref{tab:stochastic_decoding} study stochastic decoding with non-zero temperatures, while Appendix~\ref{sec:mmada} and Table~\ref{tab:mmada_scienceqa} evaluate \textsc{COVER} on the multimodal diffusion LM MMaDA with ScienceQA.
Together, these results show that context-preserving verification remains effective beyond the main deterministic text-only setting.

\subsection{Flip-Flop 
\label{sec:flip-flop}Oscillations: Empirical Analysis}

\begin{figure}[!t]
  \centering
  \includegraphics[width=\linewidth]{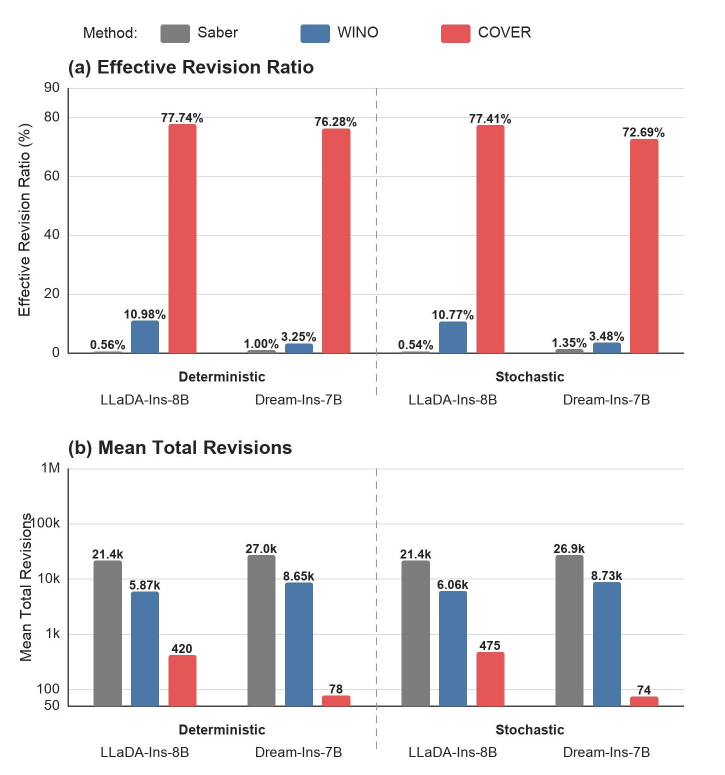}
  \caption{Flip-flop statistics under deterministic and stochastic decoding. The top panel reports the Effective Revision Ratio, and the bottom panel reports average dataset-level Total Revision counts over HumanEval and MATH500, with stochastic counts further averaged over temperatures. For Saber and WINO, revisions are \textsc{ReMask} operations; for \textsc{COVER}, revisions include both \textsc{ReMask} and \textsc{Replace}. The dashed line separates deterministic decoding from stochastic decoding with temperature sampling.
}
  \label{flip-flop-llada15}
\end{figure}

Figure~\ref{flip-flop-llada15} summarizes flip-flop behaviour under
deterministic and stochastic decoding. The detailed statistics are provided
in Appendix Tables~\ref{tab:flipflop-deterministic-details}
and~\ref{tab:flipflop-stochastic-details}. For the ratio panel, we pool the
effective and total revision counts over HumanEval and MATH500 before
computing the ratio; for stochastic decoding, the counts are further pooled
over temperatures $T\in\{0.1,0.2,0.3\}$. The count panel uses the same
dataset-level averaging convention; we additionally report normalized
per-sample counts below.

The results show that existing remask-based revocable decoders spend most
of their revision budget on ineffective ReMask events. Under deterministic
decoding, Saber obtains effective ratios of only $0.56\%$ on
LLaDA-Ins-8B and $1.00\%$ on Dream-Ins-7B, indicating that nearly every
ReMask operation is later undone by restoring the same token. WINO improves
the ratio on LLaDA-Ins-8B to $10.98\%$, but remains similarly inefficient
on Dream-Ins-7B with a ratio of $3.25\%$. In contrast, \textsc{COVER}
achieves much higher effective ratios of $77.74\%$ and $76.28\%$ on the
two models, respectively.

The same trend persists under stochastic decoding. \textsc{COVER} maintains
effective ratios of $77.41\%$ on LLaDA-Ins-8B and $72.69\%$ on
Dream-Ins-7B, whereas Saber remains near $1\%$ and WINO remains below
$11\%$. The bottom panel further shows that this improvement does not come
from performing more revisions: under deterministic decoding, \textsc{COVER}
uses only 420/78 average dataset-level revisions on LLaDA-Ins-8B/Dream-Ins-7B,
compared with 21.4k/27.0k for Saber and 5.87k/8.65k for WINO. Normalizing by
sample count gives the same conclusion: deterministic \textsc{COVER} uses
1.27/0.23 revisions per sample, compared with 64.46/81.27 for Saber and
17.69/26.07 for WINO. These results support our claim that context-preserving
in-place verification avoids spending the unmasking budget on repeated
flip-flop cycles, yielding higher revision effectiveness with substantially
lower revision overhead.

\subsection{Ablation Study}

Table~\ref{ablation} ablates two core components of \textsc{COVER} on LLaDA-Ins-8B with generation length 256.

\textbf{Effect of KV cache override.}
The variant \textbf{w/o kv} removes KV cache override and diagonal correction, and instead verifies by
masking the seed positions in the input directly.
This forces \emph{all} queries to attend to a degraded context in which the recently drafted tokens are absent,
so parallel drafting loses the conditioning signals it needs to remain stable.
The consequence is immediate in both progress and runtime: the decoder revisits the same positions more often,
spending iterations on low-value revoke and re-unmask cycles.
Across all datasets, \textbf{w/o kv} substantially increases the step count and consistently slows inference.
For example, on GSM8K, steps increase from 51.65 to 123.28, and speed drops to 0.63$\times$;
on HumanEval steps rise from 96.16 to 132.48 with a 0.61\% accuracy drop.
These results isolate KV cache override as the main mechanism that preserves a stable drafting context while still enabling faithful verification.

\textbf{Effect of stability aware seed selection.}
The variant \textbf{w/o seed} keeps KV override verification but replaces stability aware seed selection with a naive confidence drop heuristic.
While verification remains in place, the selected seeds are less compatible with cache reuse: the method more often verifies positions whose cached representations are likely to drift after the current draft, making the overridden memory less reliable as a conditioning context for other positions.
Empirically, this primarily hurts efficiency rather than causing catastrophic failures.
Across datasets, \textbf{w/o seed} increases the step count and reduces speed, for instance from 51.65 to 65.10 on GSM8K and from 96.16 to 102.14 on HumanEval, with smaller but consistent accuracy drops.
This shows that seed selection is not merely a verification policy, but a prerequisite for making cache reuse robust under multi-token drafting.

\subsection{Empirical validation of the drift proxy $\boldsymbol{d}_{\mathrm{out}}$}
\label{sec:dout-valid}

\begin{table}[!t]
\caption{\label{ablation}Ablation on LLaDA-Ins-8B with generation length 256.
Speed is relative runtime (\textsc{COVER} = 1.00$\times$).}
\resizebox{\linewidth}{!}{
\begin{tabular}{cc|ccc}
\hline
\textbf{Dataset}&\textbf{Method}  
& \textbf{Acc.(\%) $\uparrow$}  & \textbf{Steps $\downarrow$} & \textbf{Speed $\uparrow$}\\ \hline
\multicolumn{1}{c}{\multirow{3}{*}{\textbf{HumanEval}}}
& \textbf{COVER}    &40.24 &96.16 &1.00$\times$ \\
& w/o kv   & 39.63\g{-0.61} & 132.48\g{+36.32} & 0.73$\times$  \\
& w/o seed & 39.02\g{-1.22} & 102.14\g{+5.98} & 0.79$\times$ \\
\cdashline{1-5}
\multicolumn{1}{c}{\multirow{3}{*}{\textbf{MBPP}}}
& \textbf{COVER} &  39.00 &  69.75 &1.00$\times$ \\            
& w/o kv   & 38.60\g{-0.40} & 147.85\g{+78.10} & 0.68$\times$  \\  
& w/o seed & 38.00\g{-1.00} & 82.78\g{+13.03} & 0.92$\times$  \\
\cdashline{1-5}
\multicolumn{1}{c}{\multirow{3}{*}{\textbf{GSM8K}}}
& \textbf{COVER} &  77.26 &  51.65 &1.00$\times$ \\             
& w/o kv   & 76.50\g{-0.76} & 123.28\g{+71.63} & 0.63$\times$  \\    
& w/o seed & 76.80\g{-0.46} & 65.10\g{+13.45} & 0.83$\times$  \\
\cdashline{1-5}
\multicolumn{1}{c}{\multirow{3}{*}{\textbf{MATH500}}}
& \textbf{COVER}    & 32.80 & 68.05 &1.00$\times$ \\
& w/o kv   & 30.60\g{-2.20} & 96.37\g{+28.32} & 0.80$\times$  \\
& w/o seed & 32.00\g{-0.80} & 83.63\g{+15.58} & 0.89$\times$  \\
\hline
\end{tabular}}
\end{table}

\begin{figure}[t]
  \centering
  \includegraphics[width=0.85\linewidth]{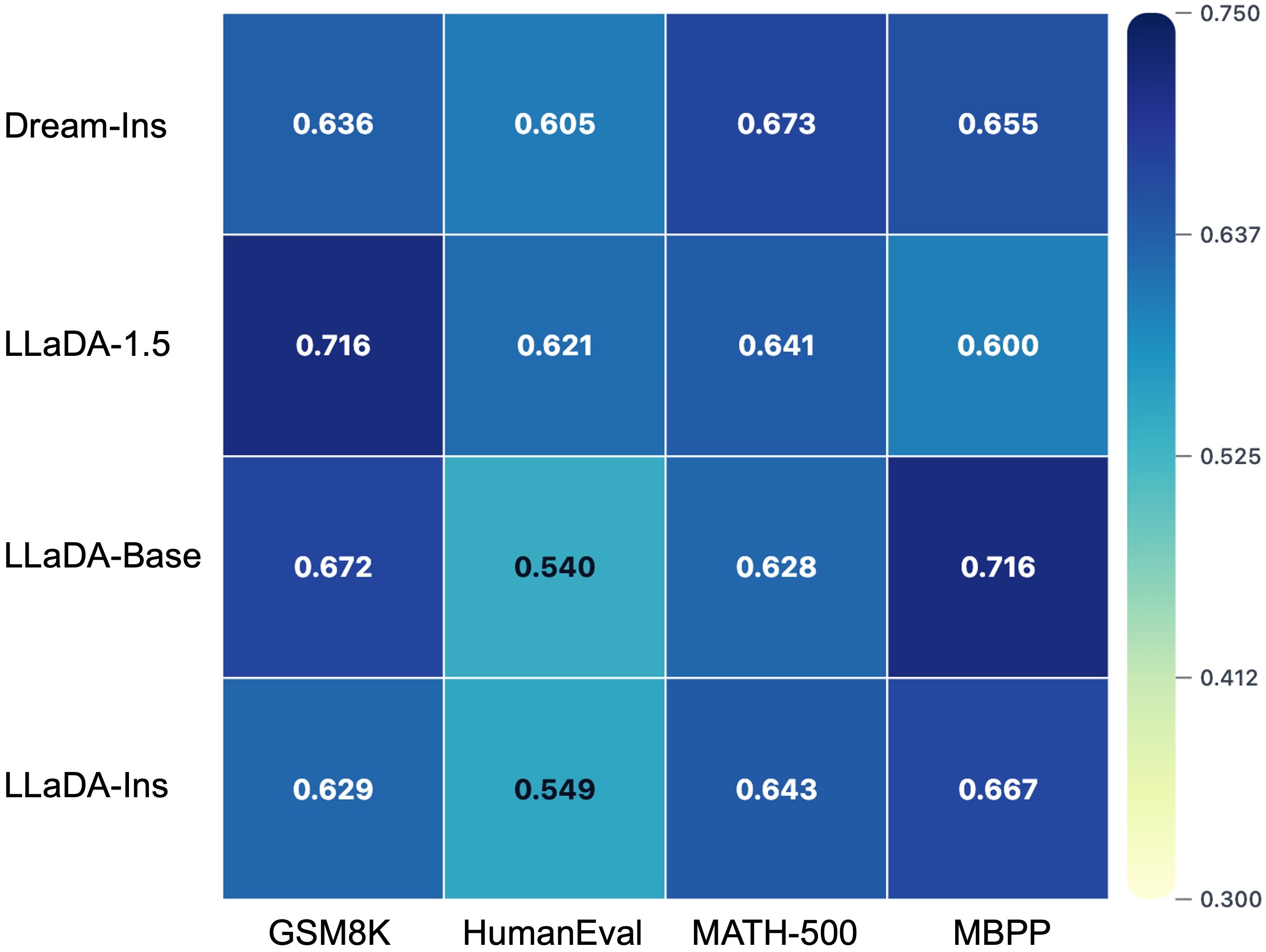}
  \caption{
  Spearman rank correlation between the proposed stability proxy $d_{\mathrm{out}}$ and measured KV drift across diffusion models and tasks. Cell colour and the value indicate the correlation coefficient; values above $0.5$ suggest a strong monotonic relationship, supporting $d_{\mathrm{out}}$ as a stability proxy.
  }
  \label{fig:heat}
\end{figure}

Our stability aware seed selection penalises candidates with large $d_{\mathrm{out}}$, which serves as a proxy for how much their cached KV states may change after the current step update.
To validate this proxy, we measure the true KV drift of each position as the average change in its key and value states before versus after the update, averaged across layers and heads, and compute the Spearman rank correlation between $d_{\mathrm{out}}$ and the measured drift (averaged over steps and examples).
Figure~\ref{fig:heat} shows consistently positive correlations across all models and tasks (from $0.540$ to $0.716$, mean $= 0.637$); correlations above $0.5$ indicate a strong monotonic relationship, confirming that larger $d_{\mathrm{out}}$ reliably corresponds to larger KV drift and supporting its use for avoiding unstable cache reuse.

\subsection{Compatibility with Fast-dLLM}
\label{sec:fast_dllm}
COVER reduces the number of decoding steps by avoiding ineffective
revocation, while Fast-dLLM-style prefix caching primarily accelerates each
decoding step by reusing prefix KV states.
These two acceleration mechanisms target different parts of the generation
cost. To verify their compatibility, we combine COVER with the prefix-cache
acceleration proposed by Fast-dLLM~\cite{wu2025fast}.

\begin{table}[t]
\centering
\small
\setlength{\tabcolsep}{4.2pt}
\renewcommand{\arraystretch}{1.12}
\caption{
Compatibility of COVER with Fast-dLLM prefix cache.
For COVER-only rows, Acc. and Steps match the corresponding length-256 COVER
entries in Table~\ref{MainResults}, with Acc. reported as a fraction rather
than a percentage.
Speed is measured relative to COVER without prefix cache in each setting.
}
\label{tab:fast_dllm_cache}
\resizebox{\linewidth}{!}{
\begin{tabular}{@{}cclcccc@{}}
\toprule
\textbf{Model} & \textbf{Dataset} & \textbf{Method}
& \textbf{Acc. $\uparrow$} & \textbf{Steps $\downarrow$} & \textbf{Time (s) $\downarrow$} & \textbf{Speed $\uparrow$} \\
\midrule
\multirow{4}{*}{Dream-Ins-7B} & \multirow{2}{*}{GSM8K} & COVER
& \textbf{0.7892} & 52.11 & 1.3521 & 1.00$\times$ \\
& & +Prefix Cache
& 0.7824 & \textbf{51.04} & \textbf{0.9345} & \textbf{1.45}$\times$ \\
\cmidrule(lr){2-7}
& \multirow{2}{*}{MATH500} & COVER
& \textbf{0.4360} & \textbf{105.30} & 2.8532 & 1.00$\times$ \\
& & +Prefix Cache
& 0.4060 & 107.04 & \textbf{2.0076} & \textbf{1.42}$\times$ \\
\midrule
\multirow{4}{*}{LLaDA-Ins-8B} & \multirow{2}{*}{GSM8K} & COVER
& \textbf{0.7726} & \textbf{51.65} & 1.6521 & 1.00$\times$ \\
& & +Prefix Cache
& 0.7589 & 53.21 & \textbf{1.3469} & \textbf{1.23}$\times$ \\
\cmidrule(lr){2-7}
& \multirow{2}{*}{MATH500} & COVER
& \textbf{0.3280} & \textbf{68.05} & 2.2370 & 1.00$\times$ \\
& & +Prefix Cache
& 0.3200 & 77.24 & \textbf{1.9895} & \textbf{1.12}$\times$ \\
\bottomrule
\end{tabular}
}
\end{table}

As shown in Table~\ref{tab:fast_dllm_cache}, prefix caching can be added on
top of COVER to further improve wall-clock efficiency. It provides consistent
speedups across Dream-Ins-7B and LLaDA-Ins-8B on GSM8K and MATH500, ranging from
$1.12\times$ to $1.45\times$, while keeping accuracy within $3.0$ percentage
points of COVER-only decoding. These results suggest that COVER is
complementary to cache-based dLLM acceleration: COVER improves net denoising
progress by reducing flip-flop revisions, while prefix caching reduces the
cost of each forward step.

\section{Conclusion}
DLLMs support parallel unmasking, but revocable decoding can waste computation through flip-flop oscillations that repeatedly remask tokens that would be restored unchanged.
We introduce an in-place KV cache override verification mechanism with diagonal correction, enabling leave-one-out style checks while preserving a stable context for parallel drafting within a single forward pass.
We also propose stability aware and adaptive seed selection that targets uncertain positions while avoiding unstable cache reuse, enabling efficient multi-token verification.
Across benchmarks on different dLLMs, \textsc{COVER} generally preserves or improves accuracy while reducing decoding steps and inference time, delivering a better speed quality tradeoff than prior revocable methods.

\section*{Impact Statement}
\textsc{COVER} improves inference efficiency by reducing ineffective remasking in revocable diffusion decoding.
Because it leaves model weights and capabilities unchanged, it introduces no new risks beyond those of the underlying dLLM.

\section*{Acknowledgments}
This work was supported in part by the UK Engineering and Physical Sciences Research Council through a Turing AI Fellowship (grant no. EP/V020579/1, EP/V020579/2) and the Prosperity Partnership scheme (grant no. UKRI566).

\bibliography{ref}
\bibliographystyle{icml2026}

\newpage
\appendix
\section{Flip Flop Overhead and Step Lower Bound}
Each flip flop at a position forces at least one additional unmask event beyond the first unmask of that position. Therefore, if $F$ is the total flip flop count, then the total number of unmask events is at least $L+F$. Since drafting selects at most $B$ positions per step, the total number of unmask events is at most $BT$, which yields Lemma~\ref{lem:flipflop-overhead}. This is a lower-bound statement: one additional flip flop need not change the ceiling immediately, but every $B$ additional flip flop events raise the lower bound by at least one step.
\begin{lemma}[Unmask budget overhead from flip-flop]
\label{lem:flipflop-overhead}
Assume decoding terminates with no \texttt{[MASK]} tokens, and drafting unmasks at most $B$
positions per step, namely $|\mathcal{D}_t|\le B$ for all $t$.
Let $F$ be the total flip flop count defined above.
Then the number of decoding steps satisfies
\[
T \;\ge\; \left\lceil \frac{L+F}{B} \right\rceil .
\]
\end{lemma}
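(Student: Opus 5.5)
The plan is a double-counting argument on unmask events, that is, transitions of a position from \texttt{[MASK]} to a concrete token. Write $N$ for the total number of such events over the whole run. I will bound $N$ from below by $L+F$ and from above by $BT$, then combine the two bounds.

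\emph{Upper bound.} By the state update rule of Section~\ref{sec:prelim_revocable}, a position goes from \texttt{[MASK]} to a token at step $t$ only if $i\in\mathcal{D}_t$. Verification either keeps or replaces an already unmasked token, or remasks it, so it never unmasks a masked position. Hence the number of unmask events at step $t$ is at most $|\mathcal{D}_t|\le B$. Summing over $t=1,\dots,T$ gives $N\le BT$.

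\emph{Lower bound.} Fix a position $i$ and let $N_i$ be its number of unmask events, so that $N=\sum_i N_i$. Decoding starts from $Y^{(0)}=\texttt{[MASK]}^L$ and ends with no \texttt{[MASK]}, so every position is unmasked at least once; hence $N_i\ge 1$. Now list the successive unmask events at $i$. By definition, a flip-flop event is a pair of consecutive unmaskings of $i$ predicting the same token, so the number of flip-flops at $i$ is at most the number of consecutive pairs, $N_i-1$. This gives $F_i\le N_i-1$, i.e.\ $N_i\ge 1+F_i$. Summing over $i$ gives $N\ge L+F$.

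Combining the two bounds yields $L+F\le BT$, so $T\ge (L+F)/B$. Since $T$ is an integer, $T\ge\lceil (L+F)/B\rceil$.

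The only delicate step is the counting convention for $F_i$. It must be read as counting pairs of \emph{consecutive} unmaskings, so that each flip-flop is charged to a distinct unmask event beyond the first. It is also necessary to confirm that \textsc{Replace} (token to token) is not counted as an unmask event. That holds because it does not involve a \texttt{[MASK]} state, so it affects neither bound.
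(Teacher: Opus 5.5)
Your proof is correct and matches the paper's argument: the paper also lets $n_i$ be the number of unmaskings of position $i$, uses completion and $F_i\le n_i-1$ to get $\sum_i n_i\ge L+F$, and bounds $\sum_i n_i=\sum_t|\mathcal{D}_t|\le BT$. Your explicit check that verification, including \textsc{Replace}, never produces an unmask event justifies the step the paper simply asserts as an identity.
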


\begin{proof}
Let $n_i:=|\mathcal{T}_i|$ be the number of times position $i$ is unmasked.
Completion implies $n_i\ge 1$ for all $i$, hence $\sum_i n_i \ge L$.
By construction, $F_i \le n_i-1$, so $n_i \ge 1+F_i$ and thus
$\sum_i n_i \ge L+\sum_i F_i = L+F$.
Moreover, each unmask event corresponds to selecting one position into some $\mathcal{D}_t$, so
$\sum_i n_i = \sum_{t=1}^{T} |\mathcal{D}_t| \le BT$.
Combining yields $BT \ge L+F$, giving the claim.
\end{proof}

\section{Post-hoc diagonal correction for faithful verification}
\label{app:diag-corr}

This appendix derives the closed form diagonal correction used in
Sec.~\ref{sec:kv_verify} to obtain faithful leave-one-out verification at seed queries
without additional attention passes.

\subsection{Notation and objective}

Consider a specific transformer layer $\ell$ and one attention head.
Let $(Q_\ell,K_\ell,V_\ell)$ be computed from the verification input $\tilde Y^{(t-1)}$
where seed positions are masked.
Let $(K'_\ell,V'_\ell)$ be the overridden memory formed by replacing the seed columns with their cached
states from step $t-1$.
Define the overridden attention scores, weights, and outputs as
\[
s^{\mathrm{ovr}}_{i,j}=\frac{q_i {k'}^\top_j}{\sqrt d},
\]
\[
w^{\mathrm{ovr}}_{i,:}=\mathrm{softmax}\!\left(s^{\mathrm{ovr}}_{i,:}\right),
\]
\[
o^{\mathrm{ovr}}_{i}=\sum_{j} w^{\mathrm{ovr}}_{i,j}\, v'_j,
\]
where $q_i$ is the query at position $i$, and $(k'_j,v'_j)$ is the overridden key and value at position $j$.

For a seed query $i\in\mathcal{S}_{t-1}$, faithful verification requires restoring only the diagonal entry:
\[
(k^{(i)}_j, v^{(i)}_j)=
\begin{cases}
(k_i, v_i), & j=i,\\
(k'_j, v'_j), & j\neq i,
\end{cases}
\]
where $(k_i,v_i)$ are the key and value from $(K_\ell,V_\ell)$ computed on the masked input.
All off diagonal columns remain unchanged.

\subsection{Single score update lemma}

\begin{lemma}[Softmax under a single score change]
\label{lemma:softmax}
Let $w=\mathrm{softmax}(s)\in\mathbb{R}^L$.
If we change only the $i$th score by $s_i\leftarrow s_i+\delta$, then the updated distribution $w'$
satisfies
\[
w'_j=\frac{w_j}{1+w_i(\exp(\delta)-1)}\quad \forall j\neq i,
\]
\[
w'_i=\frac{w_i\exp(\delta)}{1+w_i(\exp(\delta)-1)}.
\]
\end{lemma}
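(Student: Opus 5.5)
The plan is a direct computation from the definition of the softmax. Write $Z=\sum_{k=1}^{L}\exp(s_k)$, so that $w_j=\exp(s_j)/Z$ for every $j$. After the change $s_i\leftarrow s_i+\delta$, only the $i$th exponential is altered. The new normaliser is therefore
\[
Z'=\sum_{k\neq i}\exp(s_k)+\exp(s_i)\exp(\delta)=Z+\exp(s_i)\big(\exp(\delta)-1\big).
\]
Factoring out $Z$ gives $Z'=Z\,\bigl(1+w_i(\exp(\delta)-1)\bigr)$, where we use $\exp(s_i)/Z=w_i$.

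With $Z'$ in hand, the two cases follow by substitution.

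\begin{itemize}
\item For $j\neq i$, the numerator is unchanged: $w'_j=\exp(s_j)/Z'=(\exp(s_j)/Z)\cdot Z/Z'$. This equals $w_j/\bigl(1+w_i(\exp(\delta)-1)\bigr)$.
\item For $j=i$, the numerator picks up the factor $\exp(\delta)$: $w'_i=\exp(s_i)\exp(\delta)/Z'$. This gives $w_i\exp(\delta)/\bigl(1+w_i(\exp(\delta)-1)\bigr)$.
\end{itemize}

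As a sanity check, summing the new weights gives $\bigl(1-w_i+w_i\exp(\delta)\bigr)/\bigl(1+w_i(\exp(\delta)-1)\bigr)=1$.

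There is no real obstacle. The one point worth stating explicitly is that the denominator $1+w_i(\exp(\delta)-1)=(1-w_i)+w_i\exp(\delta)$ is strictly positive, because $w_i\in(0,1]$ and $\exp(\delta)>0$. This makes the division well defined.

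For the application in Sec.~\ref{sec:kv_verify}, take $w=w^{\mathrm{ovr}}_{i,:}$, $\alpha_i=w_i$, and $\delta_i=(q_ik_i^\top-q_i{k'_i}^\top)/\sqrt d$. This recovers $r_i$ exactly. The corrected output is then $o_i=(o^{\mathrm{ovr}}_i-\alpha_i v'_i+\alpha_i\exp(\delta_i)v_i)/r_i$, since the value on the diagonal column is also swapped from $v'_i$ to $v_i$.
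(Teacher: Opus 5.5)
Your proof is correct and is essentially the paper's own argument: compute the new normaliser $Z' = Z\bigl(1+w_i(\exp(\delta)-1)\bigr)$, then substitute into $w'_j=\exp(s'_j)/Z'$. Your remarks that the denominator is strictly positive and that the new weights sum to one are valid checks that the paper leaves implicit.
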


\begin{proof}
Write $w_j=\exp(s_j)/Z$ where $Z=\sum_k \exp(s_k)$.
After the change, $Z' = Z - \exp(s_i) + \exp(s_i+\delta)= Z\bigl(1+w_i(\exp(\delta)-1)\bigr)$.
Substituting into $w'_j=\exp(s'_j)/Z'$ yields the claimed formulas.
\end{proof}

\subsection{Closed form correction of attention weights}

In our setting, restoring the diagonal key replaces only the diagonal score in row $i$:
\[
\delta_i
=
\frac{q_i k_i^\top}{\sqrt d}
-
\frac{q_i {k'}^\top_i}{\sqrt d}.
\]
Let $\alpha_i = w^{\mathrm{ovr}}_{i,i}$ be the overridden diagonal weight and define the scalar rescaling factor
\[
r_i = 1 + \alpha_i \bigl(\exp(\delta_i)-1\bigr).
\]
Applying Lemma~\ref{lemma:softmax} gives the corrected attention distribution for row $i$:
\[
w_{i,j} = \frac{w^{\mathrm{ovr}}_{i,j}}{r_i}\quad \forall j\neq i,\qquad
w_{i,i} = \frac{\alpha_i\exp(\delta_i)}{r_i}.
\]
This shows explicitly that correcting the diagonal score changes the entire row through the shared normalizer.

\subsection{Closed form correction of attention outputs}

The corrected output is
\[
o_i=\sum_{j\neq i} w_{i,j} v'_j + w_{i,i} v_i.
\]
Using the identities above and $o^{\mathrm{ovr}}_{i}=\sum_{j\neq i} w^{\mathrm{ovr}}_{i,j} v'_j + \alpha_i v'_i$,
we obtain the single expression
\[
o_i
=
\frac{
o^{\mathrm{ovr}}_{i}
-\alpha_i v'_i
+\alpha_i \exp(\delta_i)\, v_i
}{
r_i
}.
\]
For non seed queries $i\notin\mathcal{S}_{t-1}$, no correction is applied and we keep $o_i=o^{\mathrm{ovr}}_{i}$.

\section{Robustness under Stochastic Decoding}
\label{sec:stochastic_decoding}

The main experiments use deterministic decoding with temperature $T=0$ to
ensure reproducibility and controlled comparison. We further evaluate whether
COVER remains effective under stochastic decoding, where token predictions are
sampled from non-zero-temperature distributions. Specifically, we test
$T \in \{0.1, 0.2, 0.3\}$ and report the mean and standard deviation across
temperatures, together with the corresponding deterministic baseline from the
main results.

\begin{table}[t]
\centering
\small
\setlength{\tabcolsep}{4.2pt}
\renewcommand{\arraystretch}{1.12}
\caption{
Robustness of COVER under stochastic decoding. Baseline values are copied from
the deterministic length-256 baseline rows in Table~\ref{MainResults};
stochastic COVER reports mean $\pm$ standard deviation over sampling
temperatures $T \in \{0.1, 0.2, 0.3\}$.
}
\label{tab:stochastic_decoding}
\resizebox{0.98\linewidth}{!}{
\begin{tabular}{@{}lllccc@{}}
\toprule
\textbf{Model} & \textbf{Dataset} & \textbf{Method}
& \textbf{Acc. (\%)} $\uparrow$
& \textbf{Steps} $\downarrow$
& \textbf{Speed} $\uparrow$ \\
\midrule
\multirow{4}{*}{Dream-Ins-7B} & \multirow{2}{*}{HumanEval} & Baseline
& $54.88$
& $256.00$
& $1.00\times$ \\
& & \textsc{COVER}
& $54.68 \pm 0.35$
& $74.74 \pm 4.55$
& $(2.66 \pm 0.17)\times$ \\
\cmidrule(lr){2-6}
& \multirow{2}{*}{MATH500} & Baseline
& $40.00$
& $256.00$
& $1.00\times$ \\
& & \textsc{COVER}
& $42.40 \pm 0.87$
& $108.54 \pm 5.18$
& $(1.79 \pm 0.09)\times$ \\
\midrule
\multirow{4}{*}{LLaDA-Ins-8B} & \multirow{2}{*}{HumanEval} & Baseline
& $39.63$
& $256.00$
& $1.00\times$ \\
& & \textsc{COVER}
& $40.85 \pm 2.20$
& $96.30 \pm 0.35$
& $(2.66 \pm 0.01)\times$ \\
\cmidrule(lr){2-6}
& \multirow{2}{*}{MATH500} & Baseline
& $30.60$
& $256.00$
& $1.00\times$ \\
& & \textsc{COVER}
& $34.93 \pm 0.99$
& $67.30 \pm 0.74$
& $(2.55 \pm 0.03)\times$ \\
\bottomrule
\end{tabular}
}
\end{table}

As shown in Table~\ref{tab:stochastic_decoding}, COVER remains stable under
moderate stochasticity. The accuracy variation is modest overall, with three of
four settings below one point and LLaDA-Ins-8B on HumanEval showing a larger but
still bounded standard deviation. Compared with the main-table deterministic
baseline, stochastic COVER reduces the average number of steps from 256 to
$67.30$--$108.54$ and achieves $1.79\times$--$2.66\times$ speedup, while
maintaining comparable or better accuracy. These results suggest that the
proposed seed selection and context-preserving verification remain useful when
the decoding trajectory is stochastic rather than fully greedy.

\section{Results on Multimodal Diffusion LM}
\label{sec:mmada}

To examine whether COVER is limited to text-only diffusion language models,
we further evaluate it on MMaDA~\citep{Yang2025MMaDAML}, a multimodal diffusion language model, using
ScienceQA~\citep{Lu2022LearnTE}, a multimodal science reasoning benchmark. Following the main
evaluation protocol, we compare COVER with the original MMaDA decoder and
WINO under the same decoding budget.

\begin{table}[t]
\centering
\small
\setlength{\tabcolsep}{4.2pt}
\renewcommand{\arraystretch}{1.12}
\caption{
Results on MMaDA with ScienceQA. We report accuracy, average decoding
steps, and relative speedup over the original decoder.
}
\label{tab:mmada_scienceqa}
\resizebox{0.6\linewidth}{!}{
\begin{tabular}{@{}lccc@{}}
\toprule
\textbf{Method} & \textbf{Acc.} $\uparrow$ & \textbf{Steps} $\downarrow$ & \textbf{Speed} $\uparrow$ \\
\midrule
Original
& 47.99
& 256.0
& $1.00\times$ \\
WINO
& 50.12
& 41.1
& $5.59\times$ \\
\textsc{COVER}
& \textbf{51.46}
& \textbf{29.4}
& $\mathbf{8.83\times}$ \\
\bottomrule
\end{tabular}
}
\end{table}

As shown in Table~\ref{tab:mmada_scienceqa}, COVER also improves the
quality--efficiency trade-off on the multimodal diffusion model. Compared
with the original MMaDA decoder, COVER improves accuracy from 47.99\% to
51.46\%, while reducing the average number of decoding steps from 256.0 to
29.4 and achieving an $8.83\times$ speedup. Compared with WINO, COVER further
reduces the decoding steps from 41.1 to 29.4 and improves accuracy by
1.34 points. These results suggest that context-preserving verification is
not restricted to the text-only setting, and can also benefit multimodal
diffusion decoding.

\section{Detailed Flip-Flop Statistics}
\label{app:flipflop-details}

\begin{table*}[t]
\caption{
\label{tab:flipflop-deterministic-details}
Detailed flip-flop statistics under deterministic decoding.
No.~Eff.~Revision counts effective revisions where the token changes, No.~Total Revision counts all revision operations, and Ratio is computed as No.~Eff.~Revision / No.~Total Revision.
For Saber and WINO, revisions correspond to \textsc{ReMask} operations; for \textsc{COVER}, revisions include both \textsc{ReMask} and \textsc{Replace}.
}
\centering
\scriptsize
\setlength{\tabcolsep}{3.2pt}
\renewcommand{\arraystretch}{0.95}
\resizebox{0.84\textwidth}{!}{
\begin{tabular}{cclrrr}
\toprule
\textbf{Model} & \textbf{Dataset} & \textbf{Method} & \textbf{No. Eff. Revision} & \textbf{No. Total Revision} & \textbf{Ratio $\uparrow$} \\
\midrule
\multirow{6}{*}{LLaDA-Ins-8B} & \multirow{3}{*}{HumanEval} & Saber & 144 & 16081 & 0.90\% \\
 &  & WINO & 246 & 3093 & 7.95\% \\
 &  & \textsc{COVER} & 99 & 138 & 71.74\% \\
\cmidrule(lr){2-6}
 & \multirow{3}{*}{MATH500} & Saber & 95 & 26721 & 0.36\% \\
 &  & WINO & 1044 & 8655 & 12.06\% \\
 &  & \textsc{COVER} & 554 & 702 & 78.92\% \\
\midrule
\multirow{6}{*}{Dream-Ins-7B} & \multirow{3}{*}{HumanEval} & Saber & 459 & 30744 & 1.49\% \\
 &  & WINO & 142 & 6660 & 2.13\% \\
 &  & \textsc{COVER} & 24 & 34 & 70.59\% \\
\cmidrule(lr){2-6}
 & \multirow{3}{*}{MATH500} & Saber & 79 & 23221 & 0.34\% \\
 &  & WINO & 420 & 10650 & 3.94\% \\
 &  & \textsc{COVER} & 95 & 122 & 77.87\% \\
\bottomrule
\end{tabular}}
\end{table*}

\begin{table*}[t]
\caption{
\label{tab:flipflop-stochastic-details}
Detailed flip-flop statistics under stochastic decoding with temperature sampling.
We evaluate temperatures $T\in\{0.1,0.2,0.3\}$ on the available datasets.
Columns use the same revision definition as Table~\ref{tab:flipflop-deterministic-details}.
}
\centering
\scriptsize
\setlength{\tabcolsep}{2.8pt}
\renewcommand{\arraystretch}{0.90}
\resizebox{0.84\textwidth}{!}{
\begin{tabular}{ccclrrr}
\toprule
\textbf{Model} & \textbf{Dataset} & \textbf{Temp.} & \textbf{Method} & \textbf{No. Eff. Revision} & \textbf{No. Total Revision} & \textbf{Ratio $\uparrow$} \\
\midrule
\multirow{18}{*}{LLaDA-Ins-8B} & \multirow{9}{*}{HumanEval} & \multirow{3}{*}{0.1} & Saber & 137 & 16257 & 0.84\% \\
 &  &  & WINO & 252 & 3145 & 8.01\% \\
 &  &  & \textsc{COVER} & 108 & 153 & 70.59\% \\
\cmidrule(lr){3-7}
 &  & \multirow{3}{*}{0.2} & Saber & 143 & 15570 & 0.92\% \\
 &  &  & WINO & 259 & 3229 & 8.02\% \\
 &  &  & \textsc{COVER} & 107 & 149 & 71.81\% \\
\cmidrule(lr){3-7}
 &  & \multirow{3}{*}{0.3} & Saber & 141 & 16031 & 0.88\% \\
 &  &  & WINO & 253 & 3190 & 7.93\% \\
 &  &  & \textsc{COVER} & 118 & 165 & 71.52\% \\
\cmidrule(lr){2-7}
 & \multirow{9}{*}{MATH500} & \multirow{3}{*}{0.1} & Saber & 86 & 26733 & 0.32\% \\
 &  &  & WINO & 1087 & 8924 & 12.18\% \\
 &  &  & \textsc{COVER} & 540 & 714 & 75.63\% \\
\cmidrule(lr){3-7}
 &  & \multirow{3}{*}{0.2} & Saber & 87 & 26823 & 0.32\% \\
 &  &  & WINO & 1047 & 8973 & 11.67\% \\
 &  &  & \textsc{COVER} & 629 & 785 & 80.13\% \\
\cmidrule(lr){3-7}
 &  & \multirow{3}{*}{0.3} & Saber & 104 & 27051 & 0.38\% \\
 &  &  & WINO & 1018 & 8887 & 11.45\% \\
 &  &  & \textsc{COVER} & 705 & 885 & 79.66\% \\
\midrule
\multirow{18}{*}{Dream-Ins-7B} & \multirow{9}{*}{HumanEval} & \multirow{3}{*}{0.1} & Saber & 503 & 30718 & 1.64\% \\
 &  &  & WINO & 148 & 6961 & 2.13\% \\
 &  &  & \textsc{COVER} & 23 & 34 & 67.65\% \\
\cmidrule(lr){3-7}
 &  & \multirow{3}{*}{0.2} & Saber & 578 & 30625 & 1.89\% \\
 &  &  & WINO & 172 & 7257 & 2.37\% \\
 &  &  & \textsc{COVER} & 31 & 45 & 68.89\% \\
\cmidrule(lr){3-7}
 &  & \multirow{3}{*}{0.3} & Saber & 728 & 30774 & 2.37\% \\
 &  &  & WINO & 162 & 6845 & 2.37\% \\
 &  &  & \textsc{COVER} & 23 & 33 & 69.70\% \\
\cmidrule(lr){2-7}
 & \multirow{9}{*}{MATH500} & \multirow{3}{*}{0.1} & Saber & 94 & 23121 & 0.41\% \\
 &  &  & WINO & 415 & 10474 & 3.96\% \\
 &  &  & \textsc{COVER} & 98 & 127 & 77.17\% \\
\cmidrule(lr){3-7}
 &  & \multirow{3}{*}{0.2} & Saber & 118 & 23022 & 0.51\% \\
 &  &  & WINO & 440 & 10444 & 4.21\% \\
 &  &  & \textsc{COVER} & 77 & 106 & 72.64\% \\
\cmidrule(lr){3-7}
 &  & \multirow{3}{*}{0.3} & Saber & 154 & 22973 & 0.67\% \\
 &  &  & WINO & 488 & 10399 & 4.69\% \\
 &  &  & \textsc{COVER} & 70 & 98 & 71.43\% \\
\bottomrule
\end{tabular}}
\end{table*}

\end{document}